\DeclareMathAlphabet{\mathcalligra}{T1}{calligra}{m}{n}
\DeclareMathAlphabet{\mathantt}{OT1}{antt}{li}{it}
\DeclareMathAlphabet{\mathpzc}{OT1}{pzc}{m}{it}
\newcommand{\argmax}{\mathop{\rm argmax}}
\newcommand{\argmin}{\mathop{\rm argmin}}
\newcommand{\D}{\mathcal{D}}
\def\G{\mathcal{G}}
\newcommand{\V}{\mathcal{V}}
\newcommand{\W}{\mathcal{W}}
\newcommand{\E}{\mathcal{E}}
\newcommand{\Y}{\mathcal{Y}}
\newcounter{myRomanCounter}
\renewcommand*{\paragraph}[1]{\par{\normalsize\bf #1}\ }
\def\epsilon{\varepsilon}
\newlength{\myskip}
\renewenvironment{itemize}%
  {\begin{list}{$\bullet$}%
     {\topsep=0in\itemsep=0pt\parsep=0pt\partopsep=0in\usecounter{itemi}}%
   }{\end{list}\addvspace{0pt}}
\let\corollary\@undefined
\let\c@corollary\@undefined
\let\endcorollary\@undefined
\let\definition\@undefined
\let\c@definition\@undefined
\let\enddefinition\@undefined
\let\theorem\@undefined
\let\c@theorem\@undefined
\let\endtheorem\@undefined
\let\lemma\@undefined
\let\c@lemma\@undefined
\let\endlemma\@undefined
\newtheoremstyle{tightItalic}
  {0.5\myskip}
  {0.5\myskip}
  {}
  {}
  {\itshape}
  {.}
  { }
  {}
\newtheoremstyle{tightBf}
  {0.5\myskip}
  {0.5\myskip}
  {}
  {}
  {\bf}
  {.}
  {.5em}
  {}
\newtheoremstyle{tightBBf}
  {0.5\myskip}
  {0.5\myskip}
  {}
  {}
  {\bf}
  {.}
  {.5em}
  {}
\theoremstyle{tightBf}
\newtheorem{theorem}{Theorem}[section]
\newtheorem*{theorem*}{Theorem}
\newaliascnt{corollary}{theorem}%
\newaliascnt{definition}{theorem}%
\newtheorem{definition}[definition]{Definition}
\newaliascnt{statement}{theorem}%
\newaliascnt{lemma}{theorem}%
\newaliascnt{example}{theorem}%
\newaliascnt{remark}{theorem}%
\newtheorem*{remark*}{Remark}
\newaliascnt{proposition}{theorem}%
\newaliascnt{property}{theorem}%
\theoremstyle{tightBBf}
\newaliascnt{problem}{theorem}%
\newtheorem{problem}[problem]{Problem}
\theoremstyle{tightItalic}
\begin{document}
\title{Guaranteed Parameter Estimation for Discrete Energy Minimization}

\author{Mengtian Li\\
Carnegie Mellon University\\
{\tt\small mtli@cs.cmu.edu}
\and
Daniel Huber\\
Carnegie Mellon University\\
{\tt\small dhuber@cs.cmu.edu}
}

\twocolumn[{%
\renewcommand\twocolumn[1][]{#1}%
\maketitle
\begin{center}
    \centering
    \includegraphics[trim={0 5cm 3cm 0}, clip, width=0.86\textwidth]{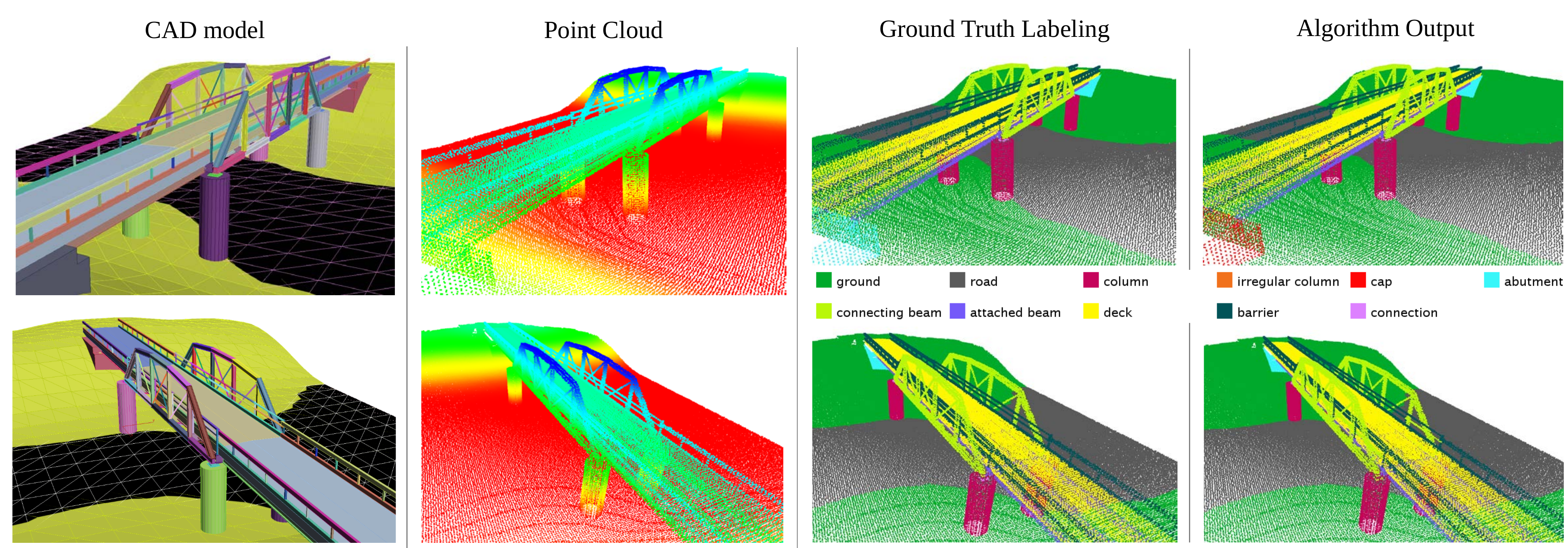}
   \captionof{figure}{Semantic labeling of a large-scale outdoor scene. We propose a generic structural learning algorithm with theoretical guarantees. When applied to scene parsing on the Cornell RGB-D dataset~\cite{koppula2011semantic, anand2012contextually}, it runs three times faster than the competing method while keeping the same level of accuracy. On a larger scale problem of bridge component recognition, our algorithm solves the scene parsing problem intractable to previous methods. The point cloud dataset we created contains 11 domain-specific semantic class and is generated by merging several simulated LiDAR scans taken from multiple locations in the CAD model scene.}
    \label{fig:semlabel}
    \bigskip
\end{center}%
}]

\begin{abstract}
Structural learning, a method to estimate the parameters for discrete energy minimization, has been proven to be effective in solving computer vision problems, especially in 3D scene parsing. As the complexity of the models increases, structural learning algorithms turn to approximate inference to retain tractability. Unfortunately, such methods often fail because the approximation can be arbitrarily poor. In this work, we propose a method to overcome this limitation through exploiting the properties of the joint problem of training time inference and learning. With the help of the learning framework, we transform the inapproximable inference problem into a polynomial time solvable one, thereby enabling tractable exact inference while still allowing an arbitrary graph structure and full potential interactions. Our learning algorithm is guaranteed to return a solution with a bounded error to the global optimal within the feasible parameter space. We demonstrate the effectiveness of this method on two point cloud scene parsing datasets. Our approach runs much faster and solves a problem that is intractable for previous, well-known approaches.
\end{abstract}

\vspace{-5.4em}
\section{Introduction}



With the increased accessibility of 3D sensing, demand is increasing for efficient methods to transform 3D data into higher level, semantically relevant representations.  Many of the most popular and successful 3D scene parsing algorithms can be reduced to some form of discrete energy minimization (or energy minimization for short) \cite{anand2012contextually, armeni20163d, hedau2009recovering, Silberman:ECCV12, schwing2012efficient, shapovalov2011cutting, silberman2011indoor, xiong2010using}.
One of the benefits of energy minimization methods is that they are able to capture contextual information or to encode prior knowledge. These capabilities are particularly important in complex 3D scene parsing, where local cues may be insufficient.
For example, in the task of bridge component recognition (Figure \ref{fig:semlabel}), attached beams have similar appearance to connecting beams. The difference is that attached beams are usually beneath the deck and on top of connections, whereas connecting beams are not. Therefore, to tell these two classes apart, the scene parsing algorithms need to incorporate knowledge of how a bridge is typically built, which governs the spatial relationships of the components.
For another example, in 3D indoor scene parsing \cite{xiong2010using}, coplanarity of two planes fitted on point clouds is a strong cue for them to be labeled as ``wall.'' In contrast, the same coplanarity might not be useful if one of them is labeled as clutter. So the existence of certain features on a pair of nodes in the graph encourages certain joint labeling of the two nodes. These relationships can depend on the feature, the label configuration, and the particular edge. In order to encode the interactions, we need a parametrized energy function with a large parameter space~\footnote{Note that the simple and popular smoothing prior model of energy minimization \cite{boykov2006graph} is unable to capture such sophisticated interactions.}. An immediate question with such formulation is how to estimate these parameters autonomously.

Parameter estimation for energy minimization, also called structural learning, fails when the input data becomes large and complex, due to the intractable inference subroutine. Such intractability arises, for example, in 3D scene parsing of complex structures, where a scene can be composed of hundreds or thousands of objects with arbitrary connectivity. For these problems, it might not be possible to solve the inference subroutine exactly or even to approximate to a certain precision. However, the inference subroutine, or the separation oracle to be precise, plays the important role of finding the subgradients of the objective in a structural learning framework. Using unbounded approximation for the separation oracle generates imperfect gradients, causing the learning algorithm to fail, since the quality can be arbitrarily poor~\cite{finley2008training}. Commonly, in structural learning, the inference subroutine is treated as a modular ``black box,'' but that approach leads to an intractable formulation.

In this paper, we show that considering together the joint problem of the overarching training and the inference subroutine enables us to exploit properties that would not be possible otherwise. Specifically, we make the following contributions. First, we propose a theoretically sound structural learning algorithm without the limitation of intractable inference. We review and exploit the properties of the joint problem of training time inference and learning. By modifying the training procedure, we can perform a training time inference corresponding to a binary submodular problem that is much easier than the original one while keeping the testing time inference problem almost the same. This method can be extended to learn higher order potentials as well. Second, while making no assumptions on the structure of the graph or on the potential type, we prove that our algorithm returns a solution within a given absolute error relative to the global optimal within the feasible parameter space. In addition, we demonstrate our algorithm's performance on two 3D scene parsing datasets. On one dataset, our algorithm runs three times faster than the competing method \cite{anand2012contextually} and achieves the same level of accuracy. Our algorithm finds a solution efficiently on the second, more complex problem, which is intractable for competing methods. Also, we show that what is learned by the model captures domain knowledge and is easily interpretable.

\section{Related Work}


Most existing literature on structural learning is based on the max-margin formulation proposed by Taskar \etal~\cite{roller2004max}. Directly minimizing the negative log-likelihood is NP-hard for many problems, and approximation must be used.  The max-margin formulation uses a convex surrogate loss, removing the need for computing the partition function. Joachims \etal~\cite{joachims2009cutting, tsochantaridis2004support} generalized this max-margin formulation to arbitrary structural outputs, a method known as structural SVM. The concept of max-margin structural learning has been successfully applied to many problems in computer vision. These works usually have limiting assumptions: tree-like or special structure output \cite{mottaghi2014role, schwing2012efficient, yang2011articulated}, small structural space \cite{hedau2009recovering, xiong2010using}, or restricted potential type \cite{anguelov2005discriminative, munoz2009contextual, taskar2004learning, szummer2008learning}. Under these assumptions, exact inference is possible. However, we don't make these assumptions, yet we can still apply exact inference during training. Other works adopt approximate inference for the separation oracle \cite{anand2012contextually}. These methods have no guarantee of the solution quality.  Notably, a common approximation scheme is convex programming relaxation \cite{jancsary2013learning}. Our early experiments show that methods based on this type of relaxation produce results with undesirably low accuracy.

The most similar work to our approach is \cite{finley2008training}, in which they point out the problem of training structural SVMs when exact inference is not possible and proposed two workarounds. The first one is to assume a constant factor approximation of the inference procedure. However, it was shown in \cite{li2016complexity} that such an assumption is not reasonable, as the problem cannot be approximated with any meaningful guarantee. The second workaround is to use the persistency property of binary MRFs, yet there is no quality guarantee of the learned parameters. In addition, we find the approach often fails in practice. Many works \cite{lacoste2013block, ramanan2014dual, shah2015multi} focus on improving the performance of structural SVM itself, but still they face the problem of an imperfect separation oracle. 

Similar to previous works, our algorithm is based on the max-margin formulation \cite{roller2004max}. We adopt non-negative constraints to restrict the parameter space \cite{szummer2008learning, taskar2004learning}, but in combination with a different loss and a different separation oracle for tractability.

The separation oracle in structural learning is frequently solved by energy minimization. Here, we highlight energy minimization algorithms used in this work and refer readers to \cite{kappes2015comparative} for a complete overview. Boykov and Kolmologrov (BK) ~\cite{boykov2004experimental} solved MAP inference for binary MRFs with a specially optimized max-flow algorithm. Rother \etal~\cite{rother2007optimizing} proposed the Quadratic Pseudo-Boolean Optimization (QPBO) algorithm for binary problems of arbitrary potentials. They first created a different auxiliary graph, in which each original node corresponds exactly to two non-terminal nodes in the new graph. Then they ran the BK algorithm on this auxiliary graph. Note that some nodes will remain unlabeled if the corresponding non-terminal node pair has conflicting assignments. For multi-class problems of arbitrary potentials, Kolmologrov~\cite{kolmogorov2006convergent} built a convergent version of the tree-reweighted max-product message passing algorithm (TRW-S). By creating a proper local polytope, an energy minimization problem can be reduced to an integer linear programming (ILP) problem \cite{werner2007linear}, and the integral constraint can be removed to derive an approximation algorithm (LP).

\section{Our Approach}

In this paper, we propose a max-margin structural learning algorithm for a pairwise model with a linear discriminant function. Our algorithm enables tractable exact training time inference through our submodular formulation, which leads to a guaranteed solution quality. Submodularity cannot be easily enforced because it requires a binary problem and limits the potential type. 
As adopted in standard machine learning algorithms, multi-class classification can be solved by training a set of 1-vs-all binary classifiers and post-processing the classifier output to make a final one-hot prediction where only a single class is labeled for each example. We adopt a similar idea. During training, we solve a set of binary classification problems but without resolving the conflicts among the binary classifiers. This setup can still learn the desired parameters, since the loss will encourage the parameters to make one-hot predictions. During testing, we enforce one-hot prediction by adding a hard constraint to the inference problem. Because we are enforcing the submoduarity on the transformed binary problems, the potential type of the original energy is not constrained.
The rest of this section introduces the desired theoretical properties of the inference procedure and the learning framework before showing our modifications to exploit these properties to build to our structural learning algorithm.

\subsection{Problems and Properties} \label{sec:pnp}

In this subsection, we first review the energy minimization formulation and the submodular property. Then we introduce our testing and training formulation.

\begin{problem}{\bf Discrete Energy Minimization} \label{prob:energy}
\begin{itemize}
\item Given a graph $\G = (\V, \E)$, define the {\em energy function}
\begin{align}
U({\bf y}) = \sum_{u\in\V} {{U_u}({y_u})} + \sum_{(u,v)\in\E} {{U_{uv}}({y_u},{y_v})},
\end{align}
where ${U_{uv}}({y_u},{y_v}) = {U_{vu}}({y_v},{y_u})$.
\item {\em Energy minimization} assigns to each node a label from a finite label set $\mathcal{L}$ to minimize the energy
\begin{align}
{\bf y}^*=\argmin_{\bf y\in\mathcal{L}^{|\V|}} U({\bf y}).
\end{align}
\end{itemize}
\end{problem}

\begin{definition}[\cite{rother2007optimizing}]
A binary (two-class) energy minimization problem is {\em submodular} if $\forall{u,v\in\V}$
\begin{align} \label{eq:submodular}
U_{uv}(0, 1) + U_{uv}(1, 0) \geq U_{uv}(0, 0) + U_{uv}(1, 1).
\end{align}
\end{definition}

It is well-known that if the energy is submodular, the global minimum can be found in polynomial time using graph cut. For multi-class problems, submodularity \cite{ramalingam2008exact} is hard to exploit due to the order dependency and magnitude constraint. The definition of submodularity requires the label set to be a totally ordered set, \eg, a depth value from 0 to 255. This definition also constrains the relative magnitude of potentials on the same edge as in the binary case. These two conditions are not generally applicable.

Another interesting property, which is exploited by  \cite{finley2008training}, is {\em persistency}, or {\em partial optimality}. Comparing to submodularity, persistency is an optimality indicator rather than an optimality guarantee. If we run the QPBO algorithm \cite{rother2007optimizing} on binary problems with arbitrary potentials, some nodes will be left unlabeled, but labelled nodes are part of the globally optimal solution. Boros \etal~\cite{boros2002pseudo} showed that in an equivalent linear programming formulation, all variables corresponding to the unlabeled nodes take 0.5 in optimal solution. Let's assume we accept relaxed ([0, 1] instead of \{0, 1\}) solutions, then running QPBO and replacing the unlabeled nodes with 0.5 will result in an approximation algorithm, which we denote as QPBO-R. 

An immediate question is how good the QPBO-R approximation is. This question is answered from a more general perspective in \cite{li2016complexity}: assuming P $\neq$
NP, for binary energy minimization in general, there does not exist a constant ratio approximation algorithm or even one with a ratio subexponential in the input size. Unfortunately, the theoretical properties of many structural learning algorithms \cite{finley2008training, lacoste2013block, shah2015multi} depend on a separation oracle with at least a constant ratio approximation, and the finding in \cite{li2016complexity} makes pointless the assumption along with the derived properties for these algorithms when applied to energy minimization in general.

We use full potential structural prediction as our testing time formulation.

\begin{problem}{\bf Full Potential Structural Prediction} \label{prob:inference}
\begin{itemize}
\item Given a node feature extractor $\phi(\cdot)$, an edge feature extractor $\phi(\cdot, \cdot)$ and a vector of weights ${\bf w}$, $\forall{k, l\in\mathcal{L}}$ define the {\em unary} and {\em pairwise potentials}
\begin{gather}
{U_u}(y_u = k) := -{\bf w}_u^k \cdot \phi(u), \\
{U_p}(y_u = k,y_v = l) := -{\bf w}_{uv}^{kl} \cdot \phi(u,v).
\end{gather}
\item Denote the graph $\G$ as ${\bf x}$, and define the {\em linear discriminant function (score function)}

\begin{align}
f({\bf x},{\bf y}) := -U({\bf y}) = {\bf w}^\intercal \Psi ({\bf x},{\bf y}).
\end{align}
\item $\Psi({\bf x},{\bf y})$ is called the {\em joint feature map}. Using {\em binary encoding} $y_u^k = \delta(y_u = k)$, $\Psi({\bf x},{\bf y})$ can be decomposed as follows:
\begin{gather}
\Psi ({\bf x},{\bf y})_{{\bf w}_u^k} = \sum_{u\in\V} y_u^k\phi(u), \\
\Psi ({\bf x},{\bf y})_{{\bf w}_{uv}^{kl}} = \sum_{(u,v)\in\E} y_u^k y_v^l \phi(u,v). \label{eq:8}
\end{gather}
\item Then the {\em testing time inference problem} is
\begin{align}
\hat{\bf y} = \argmax_{\bf y\in\mathcal{L}^{|\V|}} f({\bf x},{\bf y}) = \argmin_{\bf y\in\mathcal{L}^{|\V|}} U({\bf y}).
\end{align}
\item By abuse of notation, let $({\bf x}_i,{\bf y}_i)$ be an {\em example} from a {\em dataset} $\D = \{({\bf x}_i,{\bf y}_i)\}_{i=1}^n$.
\end{itemize}
\end{problem}

The potentials depend on both the parameters and the features, so given $\bf w$, $f = {\bf w}^\intercal \Psi ({\bf x}_i,{\bf y}_i)$ defines an energy function for an example ${\bf x}_i$. An ideal set of parameters should put the ground truth at or close to the place of lowest energy/highest score for each example so that the output of testing time inference is at or close to the ground truth. A linear score function makes the parameter estimation easier than non-linear forms. For some structural learning algorithms, kernel tricks can be applied to capture complicated mappings \cite{joachims2009cutting}.

\textbf{Full Potential Interaction} 
Notice here we have a full potential matrix ${U_p}(y_u^k,y_v^l)$ for each edge. This generalizes the well-known Potts model and associative Markov networks \cite{taskar2004learning}, where only the diagonal terms are non-zero. The relative magnitude of diagonal terms and off-diagonal terms can be arbitrary. {\em This implies that the model is more expressive as it can be both attractive (modeling a smoothing prior) or repulsive. Moreover, the potential matrix does not need to be symmetric.} Thus, such a formulation is able to encode directed relationships like relative positions, \eg, a computer monitor is usually placed above desk.

Next, we present the standard learning framework before presenting our modifications.

\begin{algorithm*}
\caption{Submodular Structural SVM for Non-submodular Problems}
\label{alg:1}
\begin{algorithmic}[1]
\State $\W \gets \varnothing$ \Comment{A working set of worst violators}
\State $\eta \gets \infty$ \Comment{The new violation in each
iteration}
\State $\xi \gets 0$ \Comment{The violation of the entire working set}
\While{$\eta - \xi > \epsilon$}
    \State
    \begin{varwidth}[t]{\linewidth}
    $({\bf w}, \xi) \gets \argmin_{{\bf w}, \xi \geq 0} \quad \frac{1}{2}||\bf w||^2 + C\xi$ \par
    \hskip\algorithmicindent
    $\qquad \quad \,$ s.t. $\quad \forall{(\bar{\bf y}_1, ...,\bar{\bf y}_n)\in\W}, \quad \frac{1}{n}{\bf w}^\intercal\sum_{i=1}^n \left[\Psi({\bf x}_i,{\bf y}_i) - \Psi({\bf x}_i,{\bar{\bf y}}_i) \right] \geq \frac{1}{n}\sum_{i=1}^n\Delta_b ({\bf y}_i,{\bar{\bf y}}_i) - \xi$ \par
    \hskip\algorithmicindent
    $\qquad \qquad \qquad \forall{j \in P}, \quad w_j \geq 0$
    \end{varwidth}
    
    \For{i = 1,...,n}
        \State ${\bar{\bf y}}_i \gets \argmax_{\hat{\bf y} \in \Y} \Delta_b ({\bf y}_i,\hat{\bf y}) + {\bf w}^\intercal \Psi ({\bf x}_i,\hat{\bf y})$ \Comment{Exact inference is now possible}
    \EndFor
    \State $\W \gets \W \cup \{(\bar{\bf y}_1, ...,\bar{\bf y}_n)\}$
    \State $\eta \gets \frac{1}{n}\sum_{i=1}^n\Delta_b ({\bf y}_i,{\bar{\bf y}}_i) -\frac{1}{n}{\bf w}^\intercal\sum_{i=1}^n \left[\Psi({\bf x}_i,{\bf y}_i) - \Psi({\bf x}_i,{\bar{\bf y}}_i) \right]$
\EndWhile
\State \textbf{return} $\bf w$
\end{algorithmic}
\end{algorithm*}

\begin{problem}{\bf Structural SVM} \cite{joachims2009cutting}  \label{prob:ssvm}
\begin{itemize}
\begin{align}
\min_{{\bf w}, \xi \geq 0} \quad
&\frac{1}{2}||\bf w||^2 + C\xi \\
\text{s.t.} \quad &\forall{(\bar{\bf y}_1, ...,\bar{\bf y}_n)\in\Y^n}: \notag
\end{align}
\vspace{-2.6em}
\begin{align}
\frac{1}{n}{\bf w}^\intercal\sum_{i=1}^n \left(\Psi - \bar{\Psi} \right) \geq \frac{1}{n}\sum_{i=1}^n\Delta ({\bf y}_i,{\bar{\bf y}}_i) - \xi, 
\label{eq:ssvmcon}
\end{align}
\end{itemize}
where $\Psi$ and $\bar{\Psi}$ are shorthand for $\Psi({\bf x}_i,{\bf y}_i)$ and $\Psi({\bf x}_i,{\bar{\bf y}}_i)$.
\end{problem}


Structural SVMs are an extension to standard SVMs for structural outputs. A structural SVM finds the optimal set of parameters that creates a large margin relative to the loss for each structural example in the dataset. Here $C$ is the parameter that controls the relative weighting between regularization and risk minimization, and $\Delta ({\bf y}_i,\hat{\bf y})$ is a loss function encoding the penalty for a wrong labeling.

Due to the combinatorial nature of the label space ($\Y = \mathcal{L}^{|\V|}$) , its size, \ie, the number of constraints (\ref{eq:ssvmcon}) is exponential. Joachims \etal~\cite{joachims2009cutting, tsochantaridis2004support} proposed the cutting-plane algorithm, which finds the optimal solution by adding only a polynomial number of constraints, given a separation oracle to compute the subgradients.

\begin{definition}
Given a loss function $\Delta ({\bf y}_i,\hat{\bf y})$, the {\em loss augmented inference} or {\em separation oracle} is a procedure that finds
\begin{align}
{\bar{\bf y}}_i = \argmax_{\hat{\bf y} \in \Y} \Delta ({\bf y}_i,\hat{\bf y}) + {\bf w}^\intercal \Psi ({\bf x}_i,\hat{\bf y}).
\end{align}
\end{definition}

The loss augmented inference finds the worst violators of the margin. Instead of bounding in the entire structural space $\hat{\bf y} \in \Y$, the cutting-plane algorithm bounds the violation of the worst violators. It can be shown that this is equivalent to solving the original problem, but now the algorithm terminates in polynomial time and returns a globally optimal solution.

\subsection{The Joint Problem for Parameter Estimation}

This subsection describes our modifications to solve the joint problem that is not limited by the intractable separation oracle as in previous approaches.
For the loss fuction, we use {\em Hamming loss} with the goal of labeling each node in the graph correctly:
\begin{align}
\Delta ({\bf y},\bar{\bf y}) =\rho \left[1-\frac{1}{|\V|}\sum_{u\in\V}\delta(y_u = {\bar y}_u) \right].
\end{align}

The loss equals to (1 - accuracy) scaled by a factor $\rho$. The structure of the loss is simple, and the loss can be merged into the unary potentials, making loss augmented problem the same problem as Problem \ref{prob:inference}. 

\subsubsection{Multi-class to Binary Transformation}

For loss augmented inference, we use a binary encoding and remove the sum-up-to-1 constraint ($\sum_{k\in\mathcal{L}} y_u^k = 1$) to use the graph-cut algorithm \cite{rother2007optimizing}. The loss also needs to be slightly modified to address the removal of the constraint. We adopt the {\em Hamming loss for binary encoding}:

\begin{align}
\Delta_b ({\bf y},\bar{\bf y}) = \frac{\rho}{2|\V|}\sum_{u\in\V}\sum_{k\in\mathcal{L}}\delta(y_u^k \neq {\bar y}_u^k)
\label{eq:lossb}.
\end{align}

The above modifications are based on the following observations:
\begin{itemize}
\item With the sum-up-to-1 constraint, $\Delta({\bf y},\bar{\bf y})$ and $\Delta_b({\bf y},\bar{\bf y})$ are equivalent;
\item Without the sum-up-to-1 constraint, let $\delta(y_u = \bar{y_u}) = \prod_{k\in\mathcal{L}} \delta(y_u^k = {\bar y}_u^k )  $, then $\Delta_b({\bf y},\bar{\bf y})$ is a tight upper bound of  $\Delta({\bf y},\bar{\bf y})$ in that $\Delta_b({\bf y},\bar{\bf y}) \geq \Delta({\bf y},\bar{\bf y})$ and $\Delta_b({\bf y},\bar{\bf y}) = 0$ if and only if  $\Delta({\bf y},\bar{\bf y}) = 0$;
\end{itemize}

In our approach, the removal of the sum-up-to-1 constraint changes the separation oracle, and the binary labeling might not have a consistent interpretation of the original labeling during training. However, the tightness 
of the loss function shows that we are effectively learning parameters to minimize the original loss.
The sum-up-to-1 constraint is implicitly enforced in a soft manner through the loss minimization during training. Soft labeling ($y_u^k \in [0, 1]$) is adopted in \cite{anand2012contextually, finley2008training}. In this case, the loss is defined by replacing $\delta(y_u^k \neq {\bar y}_u^k)$ with $|y_u^k - {\bar y}_u^k|$ in
(\ref{eq:lossb}). 
In contrast to the hard labeling that we use, for soft labeling without the sum-up-to-1 constraint, $\Delta_b ({\bf y},\bar{\bf y})$ does not have the same property of being a tight upper bound.

\subsubsection{Enforcing Submodularity}

As presented in Section \ref{sec:pnp}, without any relaxation, the transformed binary problem puts 
great challenge to the inference subroutine because the problem is NP-hard and not even possible to approximate with a guarantee. Thus, we need to enforce submodularity to enable tractable exact inference.

The transformed binary problem $U^b$ takes the form
\begin{align}
U_p^b(y_u^k, y_u^l) = y_u^k  y_u^l {U_p}(y_u = k,y_v = l).
\end{align}
Note that it does not have a full potential matrix, and only $U_p^b(1, 1)$ can be nonzero. If, for all edges, $U_p^b(1, 1)$ is non-positive, the whole energy satisfies (\ref{eq:submodular}) and is submodular. Since our algorithm depends on only $U_p^b(1, 1)$ being non-zero, the multi-class-to-binary transformation must also be applied to binary classification problems, which is not necessary in the typical 1-vs-all setup.

One way to satisfy the condition of $U_p^b(1, 1) \leq 0$ is to have all edge features $\phi(\cdot, \cdot)$ and pairwise parameters ${\bf w}_{uv}^{kl}$ be non-negative. It is reasonable to assume pairwise features can be always non-negative, since in many applications, the features are normalized to [0, 1] during a pre-processing step. Therefore, we add additional constraints only on the weights (\ref{eq:nonneg}). We summarize our formulation as follows:

\begin{problem}{\bf Partially Non-negative Structural SVM} \label{prob:nonegssvm}
\begin{itemize}
\begin{align}
\min_{{\bf w}, \xi \geq 0} \quad
&\frac{1}{2}||\bf w||^2 + C\xi \\
\text{s.t.} \quad &\forall{(\bar{\bf y}_1, ...,\bar{\bf y}_n)\in\Y^n}: \notag
\end{align}
\vspace{-2.3em}
\begin{align}
\frac{1}{n}{\bf w}^\intercal\sum_{i=1}^n \left(\Psi - \bar{\Psi} \right) \geq \frac{1}{n}\sum_{i=1}^n\Delta_b ({\bf y}_i,{\bar{\bf y}}_i) - \xi
\end{align}
\vspace{-1.9em}
\begin{align}
\qquad \forall{j \in P}, \quad w_j \geq 0
\label{eq:nonneg}
\end{align}
\end{itemize}
where $\Psi$ and $\bar{\Psi}$ are short for $\Psi({\bf x}_i,{\bf y}_i)$ and $\Psi({\bf x}_i,{\bar{\bf y}}_i)$. $P$ is the set of indices where the parameter should be non-negative, \eg, the pairwise weights. 
\end{problem}

To solve this problem, we adopt the standard max-margin formulation. Our complete algorithm is shown in Algorithm \ref{alg:1}.

\subsubsection{Solving the Modified Quadratic Program}

Non-negative constraints have been previously employed in structural learning but in a different context. In pose estimation \cite{yang2011articulated, zhu2012face}, the quadratic spring terms must be non-negative. These works employ a tree-structured model, so exact inference is possible through dynamic programming. It is shown in \cite{ramanan2014dual} that for solvers in the primal space, adding non-negative constraints amounts to clipping the parameters during the update step while leaving the rest unchanged. We adopt the dual coordinate descent solver from \cite{ramanan2014dual} to solve the minimization problem in Problem \ref{prob:nonegssvm}. In practice, however, we find that a commercial general purpose QP solver, namely Gurobi \cite{gurobi2015}, is several times faster under the same tolerance setting.

\subsection{Generalization to Higher Order Potentials}

Higher order potentials capture more interactions than the pairwise potentials. For example, a column between a pair of abutments is a \nth{3} order interaction. Our generalization is based on the pairwise reduction from arbitrary high order potentials proposed by Ishikawa \etal~\cite{ishikawa2011transformation}. Taking the \nth{3} order case as an example, the reduction is based on the identity over Boolean variables
\begin{align}
    -xyz = \min_{w \in \{0, 1\}} -w(x+y+z-2).
\end{align}
If the \nth{3} order potential is non-positive, then the constructed pairwise potentials in the reduction are also non-positive and vice versa. This enables us to enforce submodularity on \nth{3} order energy minimization problems. Likewise, we can apply similar constraints for even higher order problems. Details for general higher order can be found in the supplementary material.

\section{Analysis}

The following theorems prove that our algorithm is both efficient and globally optimal.

\begin{theorem} {\bf Correctness of the algorithm} For any training datasets $\D$ and any $\epsilon > 0$, if $({\bf w}^*, \xi^*)$ is the optimal solution of Problem \ref{prob:nonegssvm}, then Algorithm \ref{alg:1} returns a solution $({\bf w}, \xi)$ that has a better objective value than $({\bf w}^*, \xi^*)$, and for which $({\bf w}, \xi+\epsilon)$ is feasible in Problem \ref{prob:nonegssvm}.
\end{theorem}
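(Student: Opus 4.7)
The plan is to adapt the standard cutting-plane correctness argument of Joachims et al.~\cite{joachims2009cutting} to our modified problem. Three features differ from the textbook setting and need care: the loss $\Delta_b$ replaces $\Delta$, the vector ${\bf w}$ is subject to the partial non-negativity constraints (\ref{eq:nonneg}), and the separation oracle on line 6 is now solved \emph{exactly} by graph cut rather than approximated. I would split the theorem into its two claims and handle them separately.

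For the ``better objective value'' direction, I would observe that the inner QP of Algorithm \ref{alg:1} (line 5) is, at every iteration, a relaxation of Problem \ref{prob:nonegssvm}: it keeps the partial non-negativity constraints (\ref{eq:nonneg}) verbatim but enforces only the finite sub-family of margin constraints indexed by the current working set $\W \subseteq \Y^n$. Hence the optimum $({\bf w}^*, \xi^*)$ of Problem \ref{prob:nonegssvm} is feasible for every QP the algorithm solves, and by optimality of the QP the returned pair $({\bf w},\xi)$ cannot have a larger objective value than $({\bf w}^*,\xi^*)$.

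For the $({\bf w},\xi+\epsilon)$-feasibility direction, the key observation is that the loss-augmented inference on line 6 returns, exactly, the maximizer of $\Delta_b({\bf y}_i,\hat{\bf y})+{\bf w}^\intercal\Psi({\bf x}_i,\hat{\bf y})$ over all $\hat{\bf y}\in\Y$, because our submodular reduction renders the transformed binary problem tractable via graph cut. It follows that the quantity $\eta$ updated on line 10 equals the largest possible slack $\frac{1}{n}\sum_i\Delta_b({\bf y}_i,\bar{\bf y}_i) - \frac{1}{n}{\bf w}^\intercal\sum_i[\Psi-\bar\Psi]$ over \emph{all} tuples $(\bar{\bf y}_1,\dots,\bar{\bf y}_n)\in\Y^n$, not merely over those in $\W$. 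At termination the loop condition $\eta-\xi\le\epsilon$ therefore rearranges into the statement that every margin constraint of Problem \ref{prob:nonegssvm} is satisfied by $({\bf w},\xi+\epsilon)$; the non-negativity constraints on ${\bf w}$ are already satisfied by construction of the QP and are unaffected by adding $\epsilon$ to $\xi$, so $({\bf w},\xi+\epsilon)$ is feasible in full.

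The main obstacle is justifying that the oracle on line 6 really is exact, since that is precisely where the binary encoding, the dropped sum-to-one constraint, and the non-negativity of the pairwise weights must conspire to force $U_p^b(1,1)\le 0$ and hence submodularity of the transformed energy. If the oracle were only approximate, $\eta$ would under-estimate the true worst slack and the $\epsilon$-feasibility conclusion would collapse; the finding in \cite{li2016complexity} forbidding constant-factor approximability shows how badly this can go. Once exactness is in hand, the remainder is routine convex-duality and relaxation reasoning already familiar from the original structural SVM analysis.
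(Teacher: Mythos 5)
Your proposal is correct and follows essentially the same route as the paper, which simply invokes the original cutting-plane correctness proof of Joachims et al.\ and notes the two facts you identify: the added constraints (\ref{eq:nonneg}) involve only ${\bf w}$ (so the working-set QP remains a relaxation of Problem \ref{prob:nonegssvm} and the termination argument is untouched), and the separation oracle is exact, so $\eta$ is the true worst violation. You have merely unpacked the argument the paper cites, so there is nothing to add.
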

\begin{proof}
The original proof presented in \cite{joachims2009cutting} holds, since it does not depend on any constraints involving only $\bf w$, and in our case, all separation oracles during training are exact.
\end{proof}

\begin{theorem} {\bf Convergence of the algorithm} Algorithm \ref{alg:1} terminates in polynomial time.
\end{theorem}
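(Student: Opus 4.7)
The plan is to reuse the cutting-plane convergence argument of Joachims et al.\ \cite{joachims2009cutting}, adapted to the partially non-negative setting of Problem \ref{prob:nonegssvm}, together with the observation that our separation oracle is now a polynomial-time procedure. First I would verify that each iteration of Algorithm \ref{alg:1} costs polynomial time: the QP on line 5 is a convex quadratic program with linear constraints (the $n$ margin constraints from the current working set $\W$ together with the coordinate constraints $w_j\geq 0$ for $j\in P$), hence solvable in polynomial time by any standard QP solver; the loss-augmented inference on line 7 reduces, via the Hamming loss $\Delta_b$ (which decomposes into unary terms) and the multi-class-to-binary transformation from Section 3.2.1, to minimizing a binary energy whose pairwise part $U_p^b(1,1)$ is non-positive by the non-negativity constraints enforced in line 5. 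That binary energy is therefore submodular and can be solved exactly by a max-flow algorithm such as BK in polynomial time.

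The core of the proof is then to bound the number of cutting-plane iterations. I would follow the standard dual-increase argument: each iteration either terminates the loop (when $\eta-\xi\leq\epsilon$) or appends to $\W$ a new $n$-tuple $(\bar{\bf y}_1,\ldots,\bar{\bf y}_n)$ whose constraint is violated by at least $\epsilon$ at the current primal iterate. By weak duality and the usual one-dimensional line-search analysis inside the dual of Problem \ref{prob:nonegssvm}, adding a constraint violated by more than $\epsilon$ strictly increases the dual objective by at least a quantity of the form $\min\!\bigl(\tfrac{C\epsilon}{2},\tfrac{\epsilon^2}{8R^2}\bigr)$, where $R$ is an a-priori upper bound on $\bigl\lVert\tfrac{1}{n}\sum_{i=1}^n(\Psi({\bf x}_i,{\bf y}_i)-\Psi({\bf x}_i,\bar{\bf y}_i))\bigr\rVert$. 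Such an $R$ exists because the features $\phi(u),\phi(u,v)$ are bounded (they come from a finite dataset and a fixed extractor), and because the binary encoding of ${\bf y}$ is bounded. The additional coordinate constraints $w_j\geq 0$ do not affect this step: the line search that Joachims' proof uses on a single dual coordinate corresponding to the newly added cut still lies in the feasible cone, since those primal-side constraints only involve $\bf w$ and not the dual multipliers for the margin cuts.

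Next I would upper-bound the dual objective by the primal optimum of Problem \ref{prob:nonegssvm}, which is itself bounded above by $\tfrac{1}{2}\lVert {\bf 0}\rVert^2+C\cdot\max_i\tfrac{1}{n}\sum_i\Delta_b({\bf y}_i,\bar{\bf y}_i)\leq C\rho$ by plugging in the trivially feasible solution ${\bf w}={\bf 0}$, $\xi=\rho$ (using that $\Delta_b$ is upper-bounded by $\rho$). Combining the per-iteration dual increase with this absolute upper bound yields a polynomial bound $O\!\left(\max(C\rho/\epsilon,\, C\rho R^2/\epsilon^2)\right)$ on the number of outer iterations, independent of the structural output space size.

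The only real obstacle is confirming that the non-negativity constraints in (\ref{eq:nonneg}) do not spoil the Joachims dual-ascent argument: concretely, that one can still perform a positive step on the dual variable of the newly added cut without violating any primal-side constraint, because those constraints are expressed purely over $\bf w$ and so appear in the dual as \emph{additional} dual variables that can be left unchanged during the step. Once this is checked, the rest of the proof follows the template of \cite{joachims2009cutting,tsochantaridis2004support} verbatim.
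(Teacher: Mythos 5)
Your proposal is correct and follows essentially the same route as the paper's argument: the paper's proof likewise rests on (i) the separation oracle now being exact and polynomial-time because the non-negativity constraints on the pairwise weights make the transformed binary loss-augmented energy submodular, and (ii) the observation that the added coordinate constraints $w_j\geq 0$ do not disturb the convex QP in line 5 or the standard cutting-plane iteration bound of Joachims et al. Your elaboration of the dual-increase step and the $C\rho$ upper bound on the objective is the standard template the paper defers to its supplementary material, so no further comparison is needed.
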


The proof is provided in the supplementary material. Briefly, the separation oracle terminates in polynomial time, and adding negative constraints does not change the nature of the convex optimization in line 5. Note that the actual convergence rate depends on the QP solver used for line 5.


\section{Testing Time Inference}

While we have a transformed and restricted problem during training, during testing we might still have a full potential matrix for each potential. The only limitation in the expressiveness of the formulation is that all the pairwise potentials are non-positive (in the sense of minimization). We show in our experiments that this restriction has limited effect on the overall accuracy. At testing time, the inference is performed independently on each example, and the error does not accumulate as it does at training time. If the graph is small or sparse, exact inference is possible through ILP. Otherwise, TRW-S \cite{kolmogorov2006convergent} provides good approximation in practice  for general potentials \cite{kappes2015comparative}.

\begin{figure}[b]
\begin{center}
   \includegraphics[trim={2cm 6cm 3cm 7cm}, clip, width=1\linewidth]{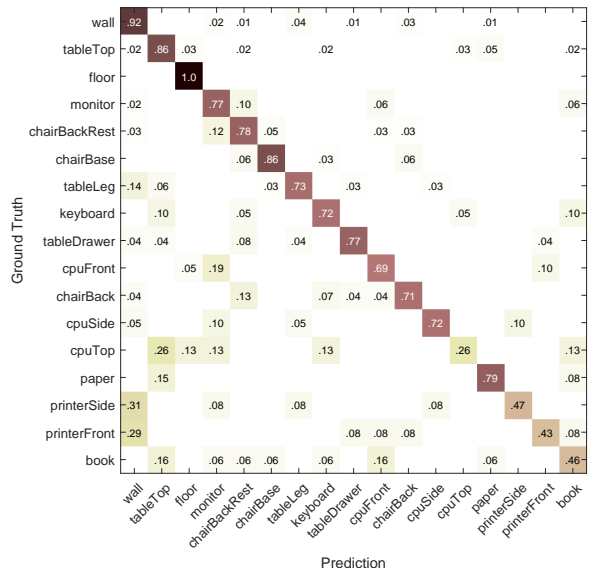}
\end{center}
   \caption{Confusion matrix of our algorithm on the Cornell RGB-D Dataset (office scenes).}
\label{fig:office}
\end{figure}

\begin{table}[t]
\centering
\begin{tabular}{l|lllll} \hline
     & Accu & macro P & macro R & Time & Speedup\\ \hline
\cite{anand2012contextually} & 81.45     & 76.79   & 70.07   & 4.11h & 1.00\\
Ours                    & 80.72     & 73.42   & 69.74   & 1.34h & \textbf{3.06}\\ \hline
 
\end{tabular}
\vspace{2mm}
\caption{Performance comparison on the Cornell RGB-D Dataset (office scenes). The second column denotes the overall accuracy. The 'P' and 'R' here stand for precision and recall respectively. As defined in \cite{anand2012contextually}, the macro P or R equates to class average P or R.}
\label{tab:cornell}
\end{table}

\section{Experiments}

We demonstrate the performance of our algorithm on the standard Cornell RGB-D dataset and a larger scale bridge dataset, which we created. On Cornell's dataset, our algorithm runs three times faster while keeping the same level of accuracy as the competing method. On the bridge dataset, the competing methods are unable to solve the scene parsing problem due to the intractable seperation oracle. In contrast, our algorithm is able to solve it efficiently and accurately. In addition, we visualize the weights learned by our algorithm to show that our model captures domain knowledge.

\subsection{Cornell RGB-D Dataset: Understanding 3D Scenes}

The Cornell RGB-D dataset \cite{koppula2011semantic, anand2012contextually} is an indoor point cloud dataset captured by Microsoft Kinect. The point clouds are obtained through merging multiple RGB-D views using the simultaneous localization and mapping (SLAM) algorithm. The point clouds are clustered into multiple segments. This dataset is suitable for testing structural learning prediction algorithms because it is necessary to take into account the neighborhood interaction for each node in order to label the segments correctly.

We compare our approach with \cite{anand2012contextually} (also \cite{finley2008training}) and use the same segmentation and features to ensure a fair comparison. The pairwise features cover visual appearance, local shape and geometry, and geometric context. Their algorithm adopts the persistency based approach in \cite{finley2008training} (QPBO-R in Section \ref{sec:pnp}). Note this method has no guarantee of optimality and an empirical heuristic needs to be adopted as discussed below. A variant of their algorithm makes use of additional class label information to limit the pairwise interactions to a predefined set of classes. The method assumes some labels are parts of an object, and restricts some potentials to be only among these labels. This information is usually not available on other structural datasets, so we do not include it in our comparison. The 4-fold cross-validation results are summarized in Table \ref{tab:cornell}. The first row is taken from their paper. Our confusion matrix is shown in Figure \ref{fig:office}. Notice that even with the additional constraints, our algorithm achieves approximately the same accuracy as  \cite{anand2012contextually} in 1/3 the time and with the critical advantage of a theoretical guarantee bounding the error.


\begin{figure}[t]
\begin{center}
   \includegraphics[trim={2cm 6.5cm 3cm 7cm}, clip, width=0.95\linewidth]{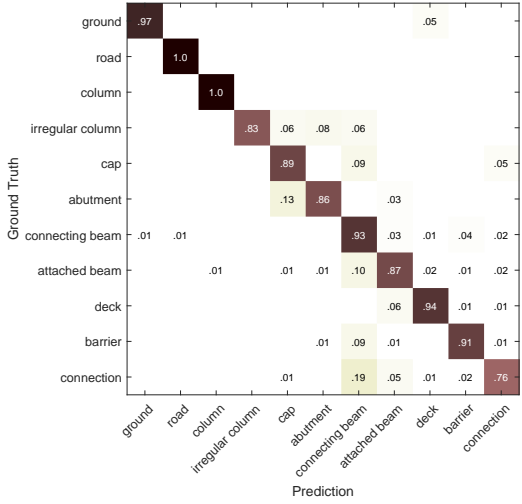}
\end{center}
   \caption{Confusion matrix of our algorithm on the bridge dataset.}
\label{fig:bridgeres}
\end{figure}


The competing method's implementation uses an undocumented heuristic that is vital for the learning procedure. In our algorithm, there is no need for this heuristic, because no relaxation is involved. Recall the rationale for interpreting an unlabeled node as 0.5 in Section 3. To compute the joint feature map $\Psi ({\bf x},{\bf y})$, we need to compute $y_u^k y_v^l$ in (\ref{eq:8}). If both are unlabeled, then $y_u^k y_v^l$ would be 0.25. In \cite{anand2012contextually}, an additional measure is taken when neither side is labeled by QPBO:
\begin{itemize}
\item $y_u^k y_v^l$ is interpreted as 0.5, if the coefficient, \ie, $U_p^b(y_u^k, y_u^l)$, is positive;
\item $y_u^k y_v^l$ is interpreted as 0, otherwise.
\end{itemize}

We found that without this rounding heuristic, the learning algorithm in \cite{anand2012contextually} terminates after a dozen or fewer iterations with a newly found violation smaller than the violation of the current working set, which is impossible if the loss augmented inference is exact. Such early termination prevents the structral SVM from learning any meaningful potentials, and the prediction is usually a failure. This effect has been observed using both their implementation and our independent implementation on Cornell's RGB-D Dataset and the bridge dataset in next subsection.

\subsection{Bridge Dataset: Scaling up to Complex Structures} \label{sec:bridgedataset}

\begin{figure}[t]
\begin{center}
   \includegraphics[trim={2cm 7.5cm 3cm 7cm}, clip, width=0.95\linewidth]{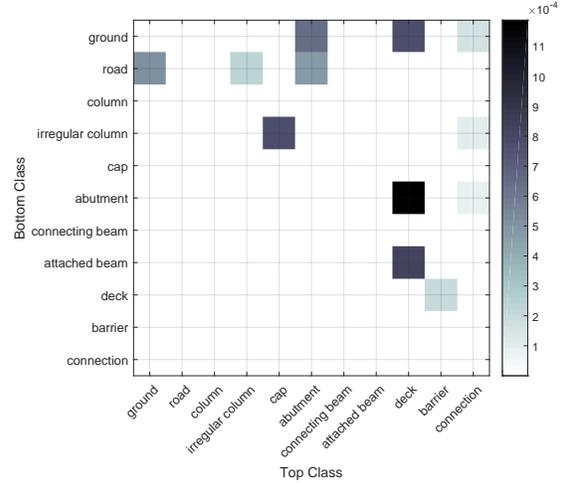}
\end{center}
   \caption{The pairwise weights for the {\em on-top-of} feature. These weights capture domain knowledge for bridge architecture.}
\label{fig:wsupport}
\end{figure}

\begin{figure*}[t]
\begin{center}
   \includegraphics[trim={0 12cm 0 0}, clip, width=0.70\linewidth]{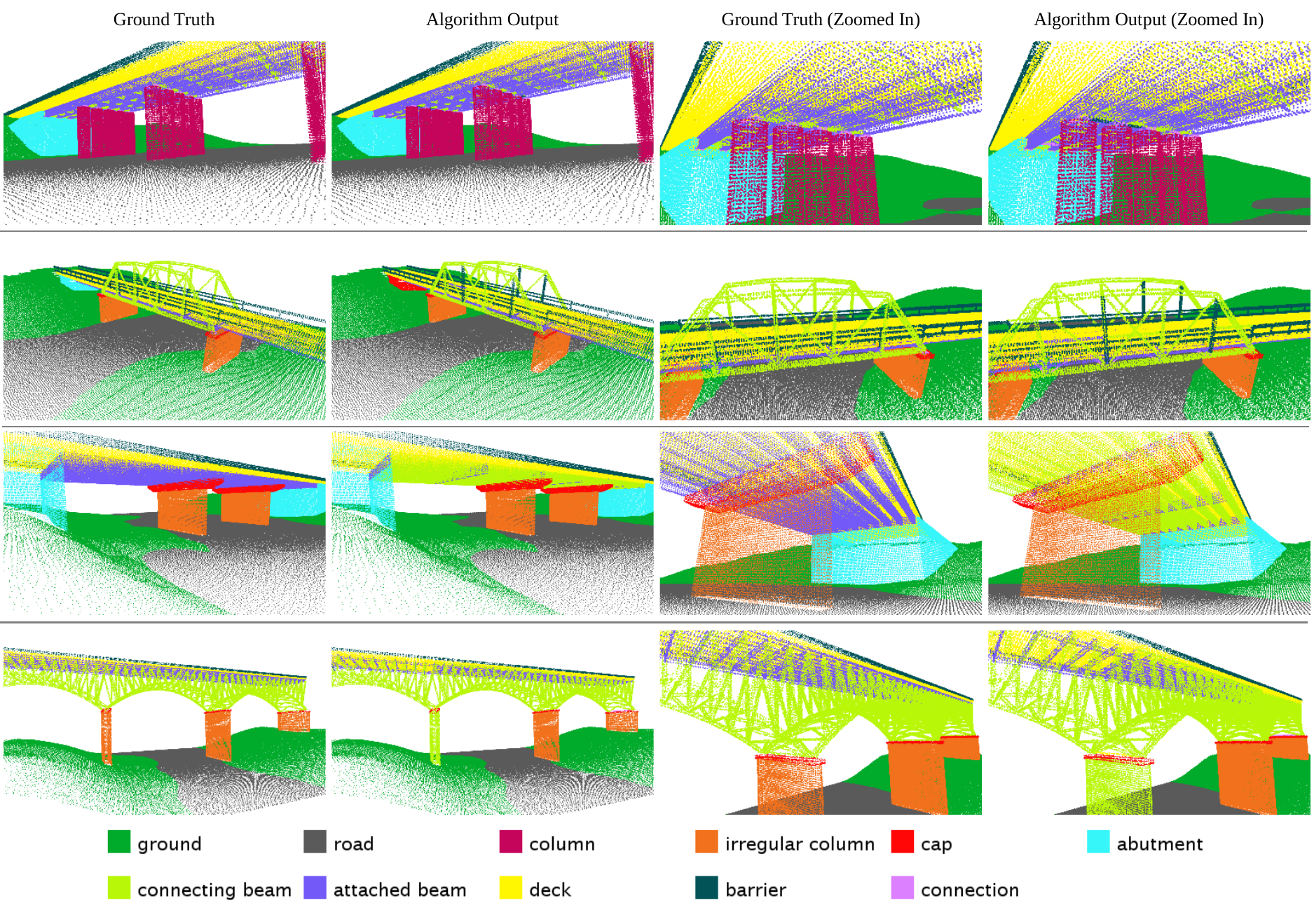}
\end{center}
   \caption{Output of our algorithm on the bridge dataset. Some errors can be seen by comparing the \nth{3} and \nth{4} columns.}
\label{fig:qualitative}
\end{figure*}

For a second experiment, we tested out our algorithm on a domain-specific dataset to evaluate its performance against a large dataset with complex structures. To this aim, we created a synthetic but realistic bridge dataset (Figure \ref{fig:semlabel}) modeling complicated building structure. Such a dataset is useful for developing 3D reverse engineering techniques, which can find their application in as-built Building Information Model (BIM) creation \cite{xiong2013automatic} and infrastructure inspection \cite{song2015}. Unlike color or RGB-D images, full building laser scan datasets are scarce, thus we utilize a realistic synthetic dataset. We constructed CAD models of bridges, and generated the point clouds by placing a virtual laser scanner, complete with a noise model, in the scene as if we are actually conducting actual field scans. Multiple scans are taken per scene and merged into a single point cloud. In total, we have 25 bridge models of five different types. Each model contains 200K to 500K 3D points after down-sampling.

Similar to the Cornell RGB-D dataset, the task is to semantically label the segments, and we define eleven semantic classes for this dataset. We train a random forest classifier on SHOT descriptors \cite{tombari2010unique} to obtain a label class distribution for each point. The descriptor encodes histogram of local surface information. We take the mean class distribution as the node feature for each segment. We use ground truth segmentation for benchmarking the contextual classification algorithms. We build a graph based on the physical adjacency of the segments and use on-top-of, principal direction consistency, and perpendicularity as three edge features. The accuracy is computed at the node level. On average, the bridge scenes contain ten times more segments and nine times more edges than the Cornell RGB-D dataset. We split the dataset into five folds, each containing five bridge models. 

The cross-validation result is summarized in Figure \ref{fig:bridgeres} and visualized in Figure \ref{fig:qualitative}. We obtain 90.07\% overall accuracy for semantic labeling the scene with 11 classes. For a single fold, the training takes 1.3 hours, and testing takes 89 seconds for five scenes. We attempted to use 
\cite{lacoste2013block} and \cite{anand2012contextually} as competing methods. However, the first fails due to the poor separation oracle and the latter could not handle this large scale of data and did not terminate after 7.5 days.



\noindent \textbf{Capturing domain knowledge.} Our algorithm is able to encode domain knowledge in the pairwise weights. For instance, we visualize the weights for the {\em on-top-of} feature in Figure \ref{fig:wsupport}. The feature is a binary indicator, and the product of this feature and the corresponding weight adds towards the overall score. The matrix reveals typical structural relationships seen in bridge architecture, \eg, the abutment and attached beam are usually placed beneath the deck.





\section{Conclusion}

In this work, we propose a method to overcome the problem caused by using unbounded approximation for the separation oracle in structural learning. We show theoretically that after properly exploiting the properties of the joint problem of optimizing structural SVM and the separation oracle, we can retrieve the theoretical guarantees of structural SVMs that are lost when unbounded approximation is used. The performance on the Cornell RGB-D dataset and our bridge dataset demonstrates the effectiveness and efficiency of this method.

\section*{Acknowledgements}
This material is based upon work supported by the National Science Foundation under Grant No. IIS-1328930.

{\small
\bibliographystyle{ieee}
\bibliography{egbib}
}


\clearpage

\includepdf[pages=1]{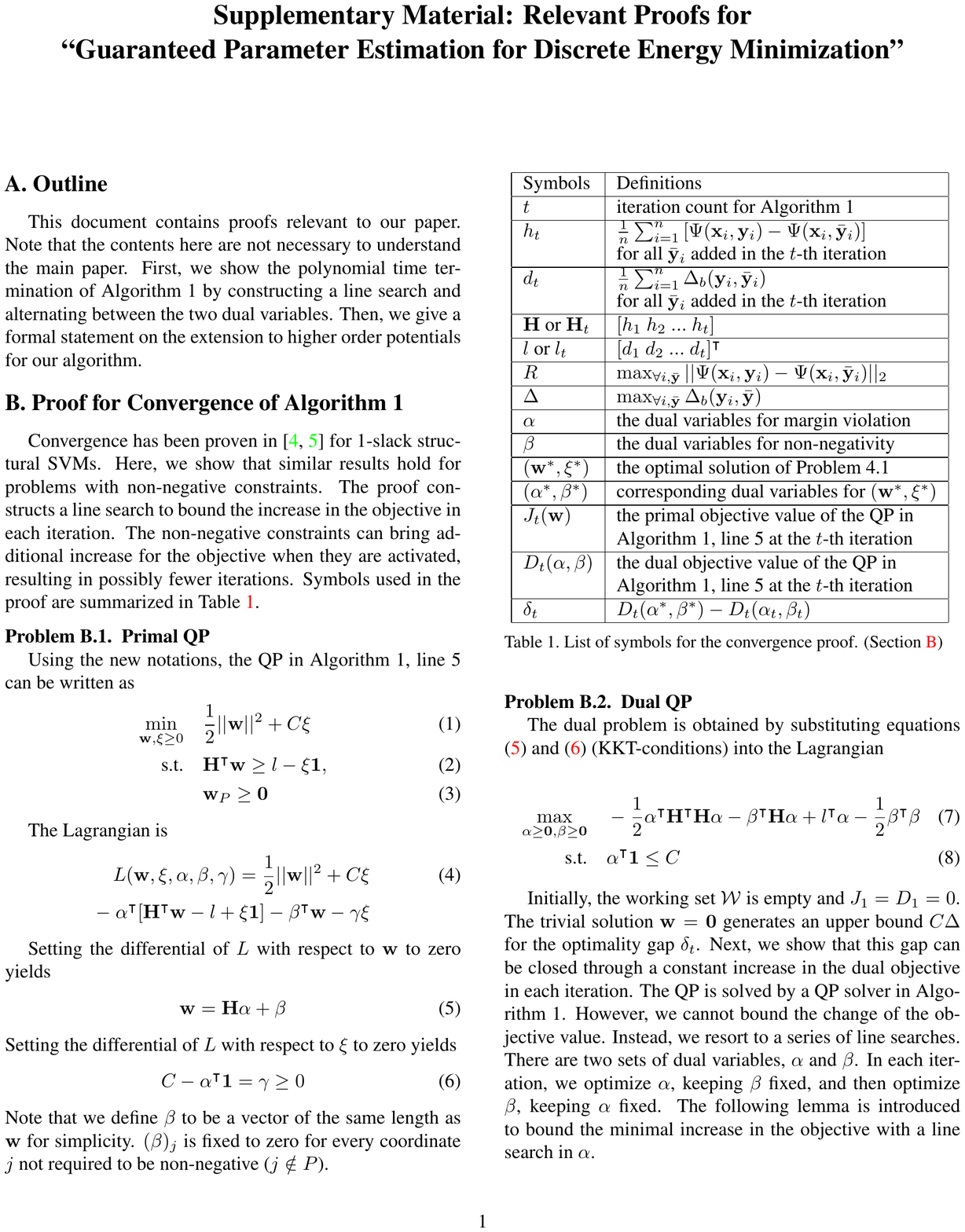}
\includepdf[pages=2]{supp.pdf}
\includepdf[pages=3]{supp.pdf}

\end{document}